\definecolor{azureblue}{RGB}{240, 255, 255}
\definecolor{darkcyan}{rgb}{0.0, 0.55, 0.55}
\definecolor{cadmiumgreen}{rgb}{0.0, 0.42, 0.24}
\newcommand{\perspective}{\textsc{Perspective API}\xspace}
\newcommand{\HateBERT}{\textsc{HateBERT}\xspace}
\newcommand{\HateCheck}{\textsc{HateCheck}\xspace}
\newcommand{\hatA}{\widehat{A}}
\newcommand{\hata}{\widehat{a}}
\newcommand{\D}{\mathcal{D}}
\renewcommand{\S}{\mathcal{S}}
\newcommand{\TE}{\texttt{TE}}
\newcommand{\ATE}{\texttt{ATE}}
\DeclareMathOperator*{\Prob}{\mathbb{P}}
\DeclareMathOperator*{\E}{\mathbb{E}}
\newtheoremstyle{thmstyle}
{0.5em} 
{0.15em} 
{} 
{} 
{\bfseries} 
{.} 
{.5em} 
{} 
\theoremstyle{thmstyle} 
\newtheorem{thm}{Theorem}
\newtheorem*{thm*}{Theorem}
\newtheorem{lem}{Lemma}
\newtheorem*{claim*}{Claim}
\theoremstyle{definition}
\newtheorem{defn}{Definition}
\theoremstyle{remark}
\newtheorem{assumption}{Assumption}
\definecolor{LightCyan}{rgb}{0.88,1,1}
\definecolor{Gray}{gray}{0.95}
\definecolor{DarkBlue}{rgb}{0.1,0.1,0.5}
\definecolor{LightBlue}{rgb}{0.3,0.3,0.7}
\definecolor{DarkGreen}{rgb}{0.1,0.5,0.1}
\definecolor{LightGreen}{rgb}{0.3,0.7,0.3}
\title{Causal ATE Mitigates Unintended Bias in Controlled Text Generation}
\author{Rahul Madhavan \\
  IISc, Bangalore \\
  \texttt{mrahul@iisc.ac.in} \\\And
  Kahini Wadhawan \\
  IBM Research, Delhi \\
  \texttt{kahini.wadhawan1@ibm.com} \\}
\begin{document}
\maketitle

\begin{abstract}
We study attribute control in language models through the method of Causal Average Treatment Effect (Causal ATE). Existing methods for the attribute control task in Language Models (LMs) check for the co-occurrence of words in a sentence with the attribute of interest, and control for them. However, spurious correlation of the words with the attribute in the training dataset,  can cause models to hallucinate the presence of the attribute when presented with the spurious correlate during inference. We show that the simple perturbation-based method of Causal ATE removes this unintended effect.
Specifically, we ground it in the problem of toxicity mitigation, where a significant challenge lies in the inadvertent bias that often emerges towards protected groups post detoxification. We show that this unintended bias can be solved by the use of the Causal ATE metric. We provide experimental validations for our claims and release our code (anonymously) here: \href{https://github.com/causalate-mitigates-bias/causal-ate-mitigates-bias}{github.com/causalate-mitigates-bias/causal-ate-mitigates-bias}.
\end{abstract}


\section{Introduction}
\label{sec:intro}
\vspace{-0.1in}







\begin{figure}[!thb]
    \includegraphics[width=0.48\textwidth]{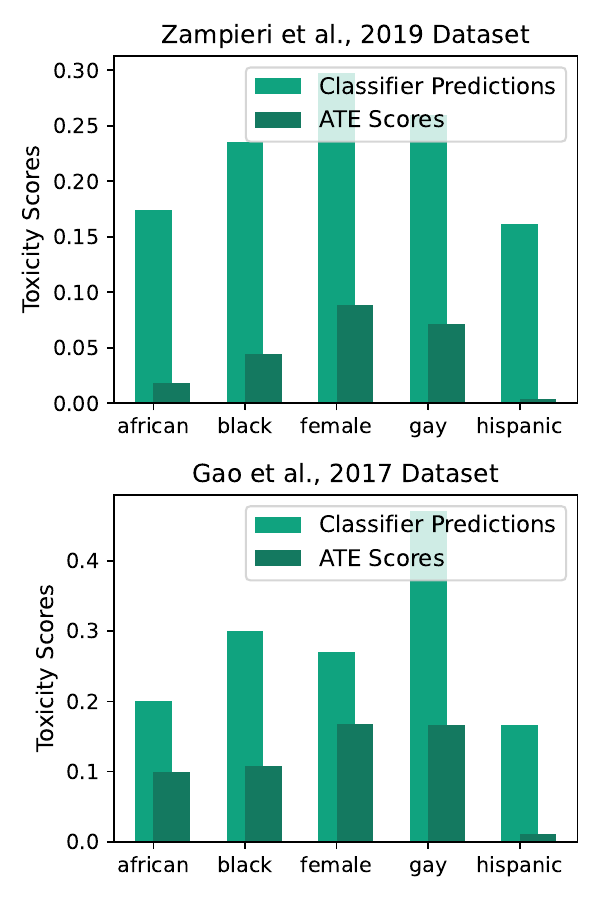}
    \caption{We plot the ATE score vs a regression based classifier for toxicity across two datasets. ATE Scores show a lower toxicity for protected groups.}
    \label{plot:Multiple Datasets}
\end{figure}

Controllable text generation methods are often used to guide the text generated by language models (LMs) towards certain desirable attributes \cite{hu2021causal,dathathri2019plug,liu2021dexperts}. The goal herein is to generate sentences whose attributes can be controlled \cite{prabhumoye2020exploring}.
Language models, which are pre-trained only for next word prediction, cannot directly control for attributes in their outputs. On the other hand, one may wish to alter words in the auto-regressively produced sentences, either accentuating or mitigating the desired attributes.
Attributes such as sentiment, writing style, language precision, tone, and toxicity are key concerns for control in language models, with particular emphasis on toxicity mitigation due to its relevance in sensitive contexts \cite{perez2020rediscovering}.

Regularizers in the reward models are often employed during training to alter the output sentences towards certain desirable attributes  \cite{hu2017toward}. 
Such regularization penalities (or rewards) often rely on models trained on real-world datasets. Such datasets contain spurious correlates -- words that correlate with certain attributes without necessarily causing them \cite{nam2020learning,udomcharoenchaikit2022mitigating}.


In the context of toxicity mitigation, prior works show that detoxification methods inadvertently impact language model outputs concerning marginalized groups \citep{welbl2021challenges}.
Words such as `gay' or `female' are identified as being toxic, as they co-occur with toxic text, and hence the LM stops speaking about them \cite{xu2021detoxifying}. 

This is called the \textit{unintended bias problem}. In this paper we provide experimental and theoretical justifications for the use of causal ATE to mitigate the unintended bias problem in text classification. We prove theoretically that for spurious correlates, the causal ATE score is upper-bounded. We also show through extensive experiments on two popular toxicity classification datasets \cite{zampieri-etal-2019-semeval, gao2017detecting} that our method shows experimental promise (See Figure \ref{plot:Multiple Datasets}).




We provide a full list of related works in Appendix Section \ref{sec:related-work}.

\vspace{-0.05in}

\subsection{Our Contributions:}

\vspace{-0.05in}
\textbf{1.} We show theoretically that the Causal ATE score of spurious correlates is less than $0.25$ under mild assumptions in Sections \ref{sec:notations} and \ref{sec:theory}.\\
\textbf{2.} We provide a theoretical basis for the study of the perturbation based Causal ATE method. We show that it can be used alongside any classifier towards improving it for false positive rates.\\ 
\textbf{3.} We provide experimental validation for our claims by showing that causal ATE scores indeed decrease the toxicity for spurious correlates to toxic sentences in Section \ref{section: experiments}.



\vspace{-0.12in}
\section{Notations and Methodology}
\label{sec:notations}

\vspace{-0.12in}
Consider a sentence $s$, made up of tokens (words) from some universe of words $W$. 
Let the list of all sentences $s$ in our dataset be denoted $\S$. Let each sentence $s \in \S$ be labelled with the presence or absence of an attribute $A$. So the dataset, which we can call $\D$, consists of tuples $(s,A(s))$ for all $s\in\S$. Let the cardinality of the labelled dataset be $|\D| = |\S| = n$.

From such a dataset, it is possible to construct an attribute model that gives us an estimate of the probability of attribute $A$, given a sentence $s$. 
i.e. It is possible to construct a model $\hatA (\cdot)$ such that $\hatA(s) = \widehat{\Prob} \{A \mid s\}$ for any given sentence $s$.
Now such a model may rely on the words in $s$. Let $s = \{w_1,\dots,w_n\}$. We now define an attribute model $\hata (\cdot)$ given a word as follows:

\begin{defn}[Attribute model $\hata (w_i)$ for any word $w_i \in W$]
\label{defn: attribute model for words}
\small
\begin{align}
    \hata(w_i) &:= \frac{|\{\text{sentences } s \in \D \text{ containing }w_i \text{ s.t. } A(s) = 1\}|}{|\{\text{sentences } s \in \D \text{ containing }w_i\}|}\\
    &=\frac{n( A(s) = 1 \mid w_i \in s)}{n(s \mid w_i \in s)}
\end{align}
\normalsize
where $n(\cdot)$ denotes the cardinality of the set satisfying the properties.
\end{defn}

Note that such a model is purely correlation based, and can be seen as the proportion of sentences containing an attribute amongst those containing a particular word. i.e. it is an estimate of the co-occurrence of attribute with the word.
Based on attribute model $\hata(\cdot)$ we can define an attribute model $\hatA(\cdot)$ for any sentence $s = \{w_1,\dots,w_k\}$ as follows:
\begin{defn}[Attribute model $\hatA(s)$ for a sentence $s \in W^k$]
\label{defn: attribute model for sentences}
\small
\begin{align}
    \hatA(s = \{w_1,\dots,w_k\}) &:= \max_{w_i\in s} \hata(w_i)\\
    &= \max \{\hata(w_1),\dots,\hata(w_k)\}\label{eqn:attribute model for word}
\end{align}
\normalsize
Note that such a model is conservative and labels a sentence as having an attribute when any word in the sentence has the attribute. For the purpose of attributes such as toxicity, such an attribute model is quite suitable.
\end{defn}


\subsection{Computation of ATE Score of a word with respect to an attribute}

Given a model representing the estimate of the attribute $A$ in a  sentence $s$, denoted as $\widehat\Prob\{A(s) = 1\}$, we can now define the ATE score. 
Note that the Causal ATE score does not depend on the particular model for the estimate $\widehat\Prob\{A(s) = 1\}$ -- i.e. we can use any estimator model.

If we denote $f_A(s)$ as the estimate of $\Prob\{A(s) = 1\}$ obtained from \textit{some} model. We can then define Causal ATE with respect to this estimate. 
If a sentence $s$ is made up of words $\{w_1,\dots,w_i,\dots,w_k\}$. For brevity, given a word $w_i$, from a sentence $s$, we may refer to the rest of the words in the sentence as context $c_i$. Consider a \textit{counter-factual} sentence $s'$ where (only) the $i$th word is changed: $\{w_1,\dots, ,w'_i,\dots,w_k\}$. Such a word $w_i'$ may be the most probable token to replace $w_i$, given the rest of the sentence. 

We now define a certain value that may be called the Treatment Effect ($\TE$), which computes the effect of replacement of $w_i$ with $w'_i$ in sentence $s$, on the attribute probability.

\begin{defn}[Treatment Effect (TE) of a word in a sentence given replacement word]

Let word $w_i$ be replaced by word $w'_i$ in a sentence $s$. Then:
\vspace{-0.12in}
\begin{align}
    \TE(s,w_i,w'_i) 
    &= f_A(s)-f_A(s')\nonumber\\
    &= f_A(\{w_1,\dots,w_i,\dots,w_k\}) \nonumber\\
    &\enspace - f_A(\{w_1,\dots, w'_i,\dots,w_k\})
\end{align}
\vspace{-0.1in}
\end{defn}
The expectation now can be taken over the replacement words, given the context, and over all contexts where the words appear.





\begin{defn}[$\ATE$ of word $w_i$ given dataset $\D$ and an attribute classifier $f(\cdot)$]
\begin{align}
    \ATE(w_i) 
    &= \E_{s\in\D\mid w_i\in s}\left[f(s)-\E_{w_i'\in W} [f(s')]\right]\label{eqn: ATE Score}
\end{align}
where $s'$ is the sentence $s$ where word $w_i$ is replaced by $w_i'$
\end{defn}

This $\ATE$ score precisely indicates the intervention effect of $w_i$ on the attribute probability of a sentence. Notice that this score roughly corresponds to the \textit{expected difference in attribute on replacement} of word.

Now say we compute the $\ATE$ scores for every token $w$ in our universe $W$ in the manner given by \hyperref[eqn: ATE Score]{Equation \ref{eqn: ATE Score}}. We can store all these scores in a large lookup-table. Now, we are in a position to compute an attribute score given a sentence.

\subsection{Computation of Attribute Score for a sentence}

The causal ATE approach suggests that we can build towards the ATE of a sentence given the ATE scores of each of the words in the sentence recursively. 
We illustrate this approach in \hyperref[fig:Causal Graph Illustration for Attributes]{Figure \ref{fig:Causal Graph Illustration for Attributes}}. First, note that each word $w_t$ is stochastically generated based on words $w_1,\dots,w_{t-1}$ in an auto-regressive manner.
If we denote $\{w_1,\dots,w_{t-1}\}$ as $s_{t-1}$, then we can say the  distribution for $w_t$, is generated from $s_{t-1}$ and the structure of the language. To sample from the probabilistic distribution, we may use an exogenous variable such as $U_t$. 

The attribute $A(s_{t-1})$ of a sentence up to $t-1$ tokens, depends only on  $\{w_1,\dots,w_{t-1}\}\equiv s_{t-1}$. 
We now describe a model for computing attribute $A(s_t)$ from $A(s_{t-1})$ and $\ATE(w_t)$. The larger English causal graph moderates influence of $w_t$ on $A(s_t)$ through the $\ATE$ score of the words. We consider $A(s_t) = \max(A(s_{t-1}),\ATE(w_t))$. This is  equivalent to

\vspace{-0.15in}
\begin{align}\label{eqn:ATE of sentence}
    A_{\infty}(s = \{w_1,\dots,w_n\}) = \max_{i\in[n]} \ATE(w_i)
\end{align}
\vspace{-0.05in}

More generally, we propose an attribute score $A(s)$ for this sentence given by $A(s) = \|\{\ATE(w_1),\dots,\ATE(w_n)\}\|_p$ where $\|\cdot\|_p$ indicates the $L_p$-norm of a vector. We can call these attribute scores $A(s)$ as the $\ATE$ scores of a sentence. 

\begin{figure}[!thb]
    \centering
    \includegraphics[width=0.44\textwidth]{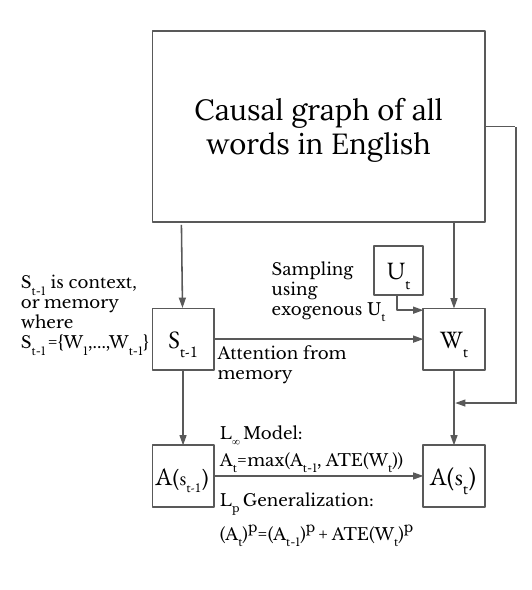}
    \caption{An Illustration of the Causal Graph used to compute the attribute score of a sentence recursively.}
    \label{fig:Causal Graph Illustration for Attributes}
\end{figure}

\section{Theory and Background}
\label{sec:theory}

\begin{table*}[ht]
\small
\centering
\caption{ATE Scores vs Classifier Predictions for different models by Protected Category}
\label{table:metrics for ATE vs Classifier Predictions}
\begin{tabular}{l|ccc|ccc|ccc|ccc}
\toprule
Group $\rightarrow$ & \multicolumn{3}{c|}{\textbf{African}} & \multicolumn{3}{c|}{\textbf{Black}} & \multicolumn{3}{c|}{\textbf{Female}} & \multicolumn{3}{c}{\textbf{Gay}} \\
\midrule
Model $\downarrow$ & Pred & ATE & Diff & Pred & ATE & Diff & Pred & ATE & Diff & Pred & ATE & Diff \\
\midrule
LR & 0.201 & 0.099 & \textcolor{darkcyan}{0.102} & 0.300 & 0.108 & \textcolor{darkcyan}{0.192} & 0.270 & 0.167 & \textcolor{darkcyan}{0.103} & 0.470 & 0.167 & \textcolor{darkcyan}{0.303} \\
SVM & 0.282 & 0.062 & \textcolor{darkcyan}{0.220} & 0.282 & 0.052 & \textcolor{darkcyan}{0.230} & 0.301 & 0.082 & \textcolor{darkcyan}{0.219} & 0.371 & 0.154 & \textcolor{darkcyan}{0.217} \\
GB & 0.225 & 0.052 & \textcolor{darkcyan}{0.173} & 0.335 & 0.071 & \textcolor{darkcyan}{0.264} & 0.225 & 0.000 & \textcolor{darkcyan}{0.225} & 0.653 & 0.204 & \textcolor{darkcyan}{0.449} \\
NB & 0.460 & 0.002 & \textcolor{darkcyan}{0.458} & 0.510 & 0.047 & \textcolor{darkcyan}{0.463} & 0.444 & 0.004 & \textcolor{darkcyan}{0.440} & 0.657 & 0.107 & \textcolor{darkcyan}{0.550} \\
NN1Layer & 0.000 & 0.003 & \textcolor{red}{-0.003} & 0.000 & 0.059 & \textcolor{red}{-0.059} & 0.000 & 0.024 & \textcolor{red}{-0.024} & 1.000 & 0.197 & \textcolor{darkcyan}{0.803} \\
NN2Layer & 0.000 & 0.000 & \textcolor{darkcyan}{0.000} & 0.000 & 0.096 & \textcolor{red}{-0.096} & 0.002 & 0.000 & \textcolor{darkcyan}{0.002} & 1.000 & 0.217 & \textcolor{darkcyan}{0.783} \\
NN3Layer & 0.000 & 0.160 & \textcolor{red}{-0.160} & 0.000 & 0.097 & \textcolor{red}{-0.097} & 0.000 & 0.000 & \textcolor{darkcyan}{0.000} & 0.993 & 0.165 & \textcolor{darkcyan}{0.828} \\
\bottomrule
\end{tabular}
\end{table*}

\vspace{-0.12in}
Now that we have laid the groundwork, we can make proceed to make the central claims of this work.
\begin{lem}
\label{lem:lemma relating sentence to word ATE}
    Consider sentence $s =\{w_1,\dots,w_k\}$. We will make two simple claims:
    \begin{enumerate}
        \item If $\nexists w_i \in s$ such that $\ATE(w_i) \geq c$, then, $A(s) < c$.
        \item If $\exists w_i \in s$ such that $\ATE(w_i) \geq c$, then, $A(s) \geq c$.
    \end{enumerate}
This lemma is straightforward to prove from Definition \ref{eqn:ATE of sentence}.
\end{lem}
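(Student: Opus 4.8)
The plan is to unfold Equation~\ref{eqn:ATE of sentence}, which identifies the attribute score with the $L_\infty$-norm $A_\infty(s) = \max_{i\in[k]} \ATE(w_i)$ of the vector of word-level ATE scores, and then invoke the two defining properties of a finite maximum: it dominates every entry, and it is attained at some entry. So I would first state explicitly that throughout this lemma $A(s)$ means $A_\infty(s)$. This matters because the two claims are false for the finite-$p$ members of the $L_p$ family proposed just after Equation~\ref{eqn:ATE of sentence}: the $L_p$-norm of a vector all of whose coordinates lie strictly below $c$ can still exceed $c$, so restricting to $p=\infty$ is the right reading of the statement and is what makes it ``straightforward''.

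For the first claim, I would observe that the hypothesis ``$\nexists\, w_i \in s$ with $\ATE(w_i) \ge c$'' is exactly the statement that $\ATE(w_i) < c$ for every $i \in [k]$. Then, since $s$ consists of finitely many tokens, $A(s) = \max_{i\in[k]} \ATE(w_i) = \ATE(w_j)$ for some index $j$, and $\ATE(w_j) < c$ gives $A(s) < c$. The one point worth flagging is that finiteness of $k$ is precisely what lets us pass from ``every coordinate is strictly below $c$'' to ``the maximum is strictly below $c$''; for an infinite supremum one would only obtain $A(s) \le c$.

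For the second claim, I would take the witnessing token $w_i$ with $\ATE(w_i) \ge c$; then $A(s) = \max_{j\in[k]} \ATE(w_j) \ge \ATE(w_i) \ge c$, using only that a maximum dominates each of its arguments. Neither direction appeals to anything about the internal structure of $\ATE(\cdot)$, the classifier $f$, or the dataset $\D$, so there is essentially no obstacle. The sole subtlety is pinning down which norm Equation~\ref{eqn:ATE of sentence} intends, which I handle by restricting the statement to $A_\infty$ at the outset; everything else is immediate from the definition of a finite maximum.
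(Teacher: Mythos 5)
Your proof is correct and is exactly the argument the paper intends: the paper offers no explicit proof beyond asserting the lemma follows from Equation~\ref{eqn:ATE of sentence}, and your unfolding of $A_\infty(s)=\max_{i\in[k]}\ATE(w_i)$ via the two defining properties of a finite maximum is that straightforward argument. Your added observation that the claims hold only for the $p=\infty$ member of the proposed $L_p$ family is a worthwhile clarification the paper leaves implicit.
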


\vspace{-0.05in}
We will now make a claim regarding the $\ATE$ score of the given words themselves. Recall that $c_i$ is the context for the word $w_i$ from a sentence $s$. Given $c_i$, $w_i$ is replaced by $w'_i$ by a perturbation model (through Masked Language Modelling).

\noindent Towards our proof, we will make two assumptions:
\begin{assumption}\label{a1:replacement}
    We make a mild assumption on this replacement process: $\hata(w'_i) < \hatA(c_i)$. Grounding this in the attribute of toxicity, we can say that the replacement word is less toxic than the context. This is probable if the replacement model has been trained on a large enough corpus. See \cite{madhavan-etal-2023-cfl} for empirical results showing this claim to be true in practice.
\end{assumption}
\begin{assumption}\label{a2:dataset}
    We make an assumption on the dataset. 
A \textit{spurious correlate} has a word with a higher attribute score in the rest of the sentence for sentences labelled as having the attribute. \noindent For example, in the case of toxicity, a spurious correlate like Muslim, has a more toxic word in the rest of the sentence, when the sentence is labelled as toxic.
\end{assumption}

\vspace{-0.05in}
\noindent Given these assumptions, we have the following theorem:

\begin{mdframed}[backgroundcolor=azureblue]
\begin{thm}
\label{thm: ATE less than 0.25}
Given Assumptions \ref{a1:replacement} and \ref{a2:dataset} for  a spurious correlate $w_i$, \enspace $\ATE(w_i) \leq 0.25$.
\end{thm}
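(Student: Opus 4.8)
The $\ATE$ in the statement is to be computed with respect to the co-occurrence classifier $f=\hatA$ of Definition~\ref{defn: attribute model for sentences}, so the plan is to unfold Equation~\ref{eqn: ATE Score} with this $f$ and split the outer expectation over the sentences $s\ni w_i$ according to their label $A(s)$. Set $p:=\hata(w_i)$; by Definition~\ref{defn: attribute model for words} this is precisely $\widehat{\Prob}\{A(s)=1\mid w_i\in s\}$, so the two label-classes carry masses $p$ and $1-p$. For any such $s$ with context $c_i$ and any replacement $w_i'$ we have $\hatA(s)=\max(\hata(w_i),\hatA(c_i))$ and $\hatA(s')=\max(\hata(w_i'),\hatA(c_i))$; Assumption~\ref{a1:replacement} gives $\hata(w_i')<\hatA(c_i)$, hence $\hatA(s')=\hatA(c_i)$ for \emph{every} term entering the inner expectation, and therefore $\TE(s,w_i,w_i')=\max(\hata(w_i),\hatA(c_i))-\hatA(c_i)\ge 0$ always.

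Next I would dispatch the two label-classes. If $A(s)=1$, Assumption~\ref{a2:dataset} (read as: some word of $c_i$ has attribute score at least $\hata(w_i)$) yields $\hatA(c_i)\ge\hata(w_i)$, so $\hatA(s)=\hatA(c_i)$ as well, and combined with the identity above we get $\TE(s,w_i,w_i')=0$: attribute-positive sentences contribute nothing to $\ATE(w_i)$. If $A(s)=0$ I keep only the crude pointwise bound $\TE(s,w_i,w_i')=\max(\hata(w_i),\hatA(c_i))-\hatA(c_i)\le\hata(w_i)=p$ (which is also $\ge 0$).

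Putting the pieces together through the tower rule,
\[
\ATE(w_i)=p\cdot 0+(1-p)\,\E\!\big[\TE\mid A(s)=0,\ w_i\in s\big]\le p(1-p),
\]
and since $p\mapsto p(1-p)$ on $[0,1]$ is maximized at $p=\tfrac12$ with value $\tfrac14$, we conclude $\ATE(w_i)\le 0.25$. The step I expect to be the real obstacle is not the computation but the faithful reading of the hypotheses: making precise that Assumption~\ref{a2:dataset} supplies $\hatA(c_i)\ge\hata(w_i)$ on the attribute-positive sentences containing the spurious correlate, and that Assumption~\ref{a1:replacement} is intended to hold for all replacements $w_i'$ entering the inner expectation of Equation~\ref{eqn: ATE Score} (or, alternatively, restricting that expectation to the support of the masked-LM proposal distribution, on which the assumption is literally stated). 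Once these are pinned down, the bound follows from the one-line optimization above.
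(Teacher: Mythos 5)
Your proposal is correct and follows essentially the same route as the paper's own proof: decompose the outer expectation by the label $A(s)$, use Assumption~\ref{a1:replacement} to force $\hatA(s')=\hatA(c_i)$, use Assumption~\ref{a2:dataset} to zero out the contribution of attribute-positive sentences, bound the treatment effect on the remaining sentences by $\hata(w_i)=p$, and conclude via $p(1-p)\leq 0.25$. Your writeup is if anything slightly more careful than the paper's in making explicit that $\hatA(s)=\max(\hata(w_i),\hatA(c_i))$ and in flagging exactly which readings of the two assumptions the argument requires.
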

\end{mdframed}
\begin{proof}
    If we consider three variables $\{\hatA(c_i),\hata(w_i),\hata(w'_i)\}$, there are six possible orderings of this set. We can subsume these orderings into two cases: \\(1) $\hatA(c_i) < \hata(w'_i)$ and (2) $\hatA(c_i) \geq \hata(w'_i)$. Within these cases, we study the variation of $ATE(w_i)$ with $\hata(w_i)$. We plot these in the Figure \ref{fig:Graphs for ATE Score}. Using a case-by-case analysis over these possibilities, we prove the statement. The full proof of the Theorem is provided in Appendix \ref{sec:proof of theorem 1}.
\end{proof}    

\begin{figure}[!bht]
\begin{center}
\includegraphics[width=0.48\textwidth]{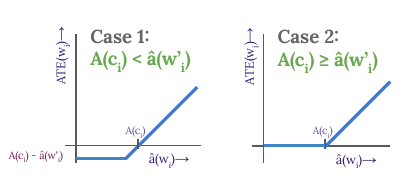}
\caption{Graph of ATE score of a given word $w_i$ with $\hata(w_i)$ given two cases }
\label{fig:Graphs for ATE Score}
    
\end{center}
\end{figure}

\noindent In the following section we provide experimental justification for our work through experimental results.

\vspace{-0.12in}
\section{Experimental Work}
\label{section: experiments}

\vspace{-0.05in}
The primary focus of our experimental assessments is to compare classifier predictions and the Average Treatment Effect (ATE) scores computed from these classifiers using the dataset provided by \citep{gao2017detecting, zampieri-etal-2019-semeval} for bias inducing words that may include protected groups.

We provide justification of our central claim that \textit{causal ATE mitigates bias} in this section through two experiments shown in Fig. \ref{plot:Multiple Datasets} and  Table \ref{table:metrics for ATE vs Classifier Predictions}. In Figure \ref{plot:Multiple Datasets} we compare the bias-mitigation performance of the ATE score compared across two datasets, for protected groups. In the second experiment, Table \ref{table:metrics for ATE vs Classifier Predictions}, over a single dataset \citep{gao2017detecting}, we compare the ATE scores computed from various classifiers with the classifier predictions. 

Together, these experiments show that across datasets and models, our ATE based classification provides lower than 0.25 toxicity score for protected groups. Moreover, it reduces the toxicity in the classifier significantly as noticeable in Figure \ref{fig:Multiple Models plot}. We provide the full code in \href{https://github.com/causalate-mitigates-bias/causal-ate-mitigates-bias}{our anonymous GitHub repository}.


\vspace{-0.12in}
\section{Discussion on Generalizability}
\label{sec:conclusions}
\vspace{-0.1in}

While we our experimental results have pertained to the use of Causal ATE as a metric for mitigating bias in toxicity classification, our theoretical results extend to any language attributes. In fact, in Appendix Section \ref{section: Causal ATE is Generalizable}, we showcase different style attributes to which such an analysis can be applied. We hope that such causal approaches can be utilized for general use cases such as style control using LLMs.

In conclusion, our work provides a theoretical justification for using the 
causality-based concepts of counterfactuals, and ATE scores for controlled text generation. We provide experimental results that validate these claims. 
We show that the simple perturbation-based method of Causal ATE removes the unintended bias effect through reduction of false positives, additionally making systems more robust to biased data. 


\section{Limitations}
\label{sec: Limitations}

The limitations of our proposed framework are described in detail in this section. 

\noindent\textbf{1. Owing to Pre-trained models:}
Third-party hatespeech detectors such as \HateBERT tend to overestimate the prevalence of toxicity in texts having mentions of minority or protected groups due to sampling bias, or just spurious correlations \citep{paz2020hate, waseem2016you, dhamala2021bold}. ATE computation though following causal mechanisms rely on these detectors for initial  attribute probability scores. Additionally, these models suffer from low annotator agreement during dataset annotation because of absence of concrete defining hatespeech taxonomy \citep{sap2019risk}. Causal nature of our approach tends to mitigates bias but not completely eliminated the problem. 

\noindent\textbf{2. Owing to language and training corpus:}
We showcase empirically the utility of our theoretical claims in this study and conducted monolingual experiments on English language which could be further extended to other languages. Additionally, training corpora used for training \HateBERT and MLM model are known to contain curated data from internet, where reliability and factual accuracy is a known issue \citep{gehman2020realtoxicityprompts}. Hence, we are limited by the distributions of our training corpora in terms of what the model can learn and infer. 

\noindent\textbf{3. Owing to distribution shift between datasets:}
There are limitations that get introduced due to change in vocabulary from training to test sets. Sometimes, words which occur in test set are not in ATE training set, we ignore such words but could impact downstream perfomance of LLM if word was important. In case of such a distribution shift between the datasets, our model may not work as expected.

\section{Ethics Statement}
\label{sec: Ethics Statement}
Our paper addresses the crucial issue of bias and toxicity in language models by using causal methods that involve several ethical concerns, that we address herein:

\noindent \textbf{1. Monolingual limitation :} This work addresses the problem of mitigation of toxicity in Language models (LMs) for English language, even though there more than 7000 languages globally \citep{joshi2020state} and future works should address more generalizable and multilingual solutions so that safety is promised for diverse set of speakers and not limited to English speakers \citep{weidinger2022taxonomy}

\noindent \textbf{2. No one fixed toxicity taxonomy:} Literature survey  highlights the fact that toxicity, hate and abuse and other related concepts are loosely defined and vary based on demographics and different social groups \citep{paz2020hate, yin2021towards}. Henceforth, affecting the quality of hatespeech detection systems (\HateBERT) used in this work. These variations differences between cultural definitions of toxicity poses an ethical challenge \citep{jacobs2021measurement, welbl2021challenges}. 

\noindent \textbf{3. Third party classifiers for toxicity detection:} Reliance on the third party classifiers for toxicity detection can itself beat the purpose of fairness as these systems are reported to be biased towards certain protected groups and overestimate the prevelence of toxicity associated with them in the texts \citep{davidson2019racial, abid2021large, hutchinson2020social, dixon2018measuring, sap2019risk}. For most part, we take care of these by using causal mechanisms but the ATE computation still involves using a toxicity classifier (\HateBERT) model.

\section{Potential Risks}
Any controlled generation method runs the runs the risk of being reverse-engineered, and this becomes even more crucial for detoxification techniques. In order to amplify their ideologies, extremists or terrorist groups could potentially subvert these models by prompting them to generate extremist, offensive and hateful content \citep{mcguffie2020radicalization}.



\section{References}

\bibliography{References}

\begin{thebibliography}{38}
\expandafter\ifx\csname natexlab\endcsname\relax\def\natexlab#1{#1}\fi

\bibitem[{Abid et~al.(2021)Abid, Farooqi, and Zou}]{abid2021large}
Abubakar Abid, Maheen Farooqi, and James Zou. 2021.
\newblock Large language models associate muslims with violence.
\newblock \emph{Nature Machine Intelligence}, 3(6):461--463.

\bibitem[{Caselli et~al.(2020)Caselli, Basile, Mitrovi{\'c}, and Granitzer}]{caselli2020hatebert}
Tommaso Caselli, Valerio Basile, Jelena Mitrovi{\'c}, and Michael Granitzer. 2020.
\newblock Hatebert: Retraining bert for abusive language detection in english.
\newblock \emph{arXiv preprint arXiv:2010.12472}.

\bibitem[{Dathathri et~al.(2019)Dathathri, Madotto, Lan, Hung, Frank, Molino, Yosinski, and Liu}]{dathathri2019plug}
Sumanth Dathathri, Andrea Madotto, Janice Lan, Jane Hung, Eric Frank, Piero Molino, Jason Yosinski, and Rosanne Liu. 2019.
\newblock Plug and play language models: A simple approach to controlled text generation.
\newblock \emph{arXiv preprint arXiv:1912.02164}.

\bibitem[{Davidson et~al.(2019)Davidson, Bhattacharya, and Weber}]{davidson2019racial}
Thomas Davidson, Debasmita Bhattacharya, and Ingmar Weber. 2019.
\newblock Racial bias in hate speech and abusive language detection datasets.
\newblock \emph{arXiv preprint arXiv:1905.12516}.

\bibitem[{Dhamala et~al.(2021)Dhamala, Sun, Kumar, Krishna, Pruksachatkun, Chang, and Gupta}]{dhamala2021bold}
Jwala Dhamala, Tony Sun, Varun Kumar, Satyapriya Krishna, Yada Pruksachatkun, Kai-Wei Chang, and Rahul Gupta. 2021.
\newblock Bold: Dataset and metrics for measuring biases in open-ended language generation.
\newblock In \emph{Proceedings of the 2021 ACM Conference on Fairness, Accountability, and Transparency}, pages 862--872.

\bibitem[{Dixon et~al.(2018)Dixon, Li, Sorensen, Thain, and Vasserman}]{dixon2018measuring}
Lucas Dixon, John Li, Jeffrey Sorensen, Nithum Thain, and Lucy Vasserman. 2018.
\newblock Measuring and mitigating unintended bias in text classification.
\newblock In \emph{Proceedings of the 2018 AAAI/ACM Conference on AI, Ethics, and Society}, pages 67--73.

\bibitem[{Feder et~al.(2022)Feder, Keith, Manzoor, Pryzant, Sridhar, Wood-Doughty, Eisenstein, Grimmer, Reichart, Roberts et~al.}]{feder2022causal}
Amir Feder, Katherine~A Keith, Emaad Manzoor, Reid Pryzant, Dhanya Sridhar, Zach Wood-Doughty, Jacob Eisenstein, Justin Grimmer, Roi Reichart, Margaret~E Roberts, et~al. 2022.
\newblock Causal inference in natural language processing: Estimation, prediction, interpretation and beyond.
\newblock \emph{Transactions of the Association for Computational Linguistics}, 10:1138--1158.

\bibitem[{Gao and Huang(2017)}]{gao2017detecting}
Lei Gao and Ruihong Huang. 2017.
\newblock Detecting online hate speech using context aware models.
\newblock \emph{arXiv preprint arXiv:1710.07395}.

\bibitem[{Gehman et~al.(2020)Gehman, Gururangan, Sap, Choi, and Smith}]{gehman2020realtoxicityprompts}
Samuel Gehman, Suchin Gururangan, Maarten Sap, Yejin Choi, and Noah~A Smith. 2020.
\newblock Realtoxicityprompts: Evaluating neural toxic degeneration in language models.
\newblock \emph{arXiv preprint arXiv:2009.11462}.

\bibitem[{Gururangan et~al.(2020)Gururangan, Marasovi{\'c}, Swayamdipta, Lo, Beltagy, Downey, and Smith}]{gururangan2020don}
Suchin Gururangan, Ana Marasovi{\'c}, Swabha Swayamdipta, Kyle Lo, Iz~Beltagy, Doug Downey, and Noah~A Smith. 2020.
\newblock Don't stop pretraining: adapt language models to domains and tasks.
\newblock \emph{arXiv preprint arXiv:2004.10964}.

\bibitem[{Hu and Li(2021)}]{hu2021causal}
Zhiting Hu and Li~Erran Li. 2021.
\newblock A causal lens for controllable text generation.
\newblock \emph{Advances in Neural Information Processing Systems}, 34:24941--24955.

\bibitem[{Hu et~al.(2017)Hu, Yang, Liang, Salakhutdinov, and Xing}]{hu2017toward}
Zhiting Hu, Zichao Yang, Xiaodan Liang, Ruslan Salakhutdinov, and Eric~P Xing. 2017.
\newblock Toward controlled generation of text.
\newblock In \emph{International conference on machine learning}, pages 1587--1596. PMLR.

\bibitem[{Hutchinson et~al.(2020)Hutchinson, Prabhakaran, Denton, Webster, Zhong, and Denuyl}]{hutchinson2020social}
Ben Hutchinson, Vinodkumar Prabhakaran, Emily Denton, Kellie Webster, Yu~Zhong, and Stephen Denuyl. 2020.
\newblock Social biases in nlp models as barriers for persons with disabilities.
\newblock \emph{arXiv preprint arXiv:2005.00813}.

\bibitem[{Jacobs and Wallach(2021)}]{jacobs2021measurement}
Abigail~Z Jacobs and Hanna Wallach. 2021.
\newblock Measurement and fairness.
\newblock In \emph{Proceedings of the 2021 ACM conference on fairness, accountability, and transparency}, pages 375--385.

\bibitem[{Joshi et~al.(2020)Joshi, Santy, Budhiraja, Bali, and Choudhury}]{joshi2020state}
Pratik Joshi, Sebastin Santy, Amar Budhiraja, Kalika Bali, and Monojit Choudhury. 2020.
\newblock The state and fate of linguistic diversity and inclusion in the nlp world.
\newblock \emph{arXiv preprint arXiv:2004.09095}.

\bibitem[{Kaddour et~al.(2022)Kaddour, Lynch, Liu, Kusner, and Silva}]{kaddour2022causal}
Jean Kaddour, Aengus Lynch, Qi~Liu, Matt~J Kusner, and Ricardo Silva. 2022.
\newblock Causal machine learning: A survey and open problems.
\newblock \emph{arXiv preprint arXiv:2206.15475}.

\bibitem[{Keskar et~al.(2019)Keskar, McCann, Varshney, Xiong, and Socher}]{keskar2019ctrl}
Nitish~Shirish Keskar, Bryan McCann, Lav~R Varshney, Caiming Xiong, and Richard Socher. 2019.
\newblock Ctrl: A conditional transformer language model for controllable generation.
\newblock \emph{arXiv preprint arXiv:1909.05858}.

\bibitem[{Krause et~al.(2020)Krause, Gotmare, McCann, Keskar, Joty, Socher, and Rajani}]{krause2020gedi}
Ben Krause, Akhilesh~Deepak Gotmare, Bryan McCann, Nitish~Shirish Keskar, Shafiq Joty, Richard Socher, and Nazneen~Fatema Rajani. 2020.
\newblock Gedi: Generative discriminator guided sequence generation.
\newblock \emph{arXiv preprint arXiv:2009.06367}.

\bibitem[{Lees et~al.(2022)Lees, Tran, Tay, Sorensen, Gupta, Metzler, and Vasserman}]{lees2022new}
Alyssa Lees, Vinh~Q Tran, Yi~Tay, Jeffrey Sorensen, Jai Gupta, Donald Metzler, and Lucy Vasserman. 2022.
\newblock A new generation of perspective api: Efficient multilingual character-level transformers.
\newblock \emph{arXiv preprint arXiv:2202.11176}.

\bibitem[{Liu et~al.(2021)Liu, Sap, Lu, Swayamdipta, Bhagavatula, Smith, and Choi}]{liu2021dexperts}
Alisa Liu, Maarten Sap, Ximing Lu, Swabha Swayamdipta, Chandra Bhagavatula, Noah~A Smith, and Yejin Choi. 2021.
\newblock Dexperts: Decoding-time controlled text generation with experts and anti-experts.
\newblock \emph{arXiv preprint arXiv:2105.03023}.

\bibitem[{Madhavan et~al.(2023)Madhavan, Garg, Wadhawan, and Mehta}]{madhavan-etal-2023-cfl}
Rahul Madhavan, Rishabh Garg, Kahini Wadhawan, and Sameep Mehta. 2023.
\newblock \href {https://doi.org/10.18653/v1/2023.findings-acl.720} {{CFL}: Causally fair language models through token-level attribute controlled generation}.
\newblock In \emph{Findings of the Association for Computational Linguistics: ACL 2023}, pages 11344--11358, Toronto, Canada. Association for Computational Linguistics.

\bibitem[{McGuffie and Newhouse(2020)}]{mcguffie2020radicalization}
Kris McGuffie and Alex Newhouse. 2020.
\newblock The radicalization risks of gpt-3 and advanced neural language models.
\newblock \emph{arXiv preprint arXiv:2009.06807}.

\bibitem[{Nam et~al.(2020)Nam, Cha, Ahn, Lee, and Shin}]{nam2020learning}
Junhyun Nam, Hyuntak Cha, Sungsoo Ahn, Jaeho Lee, and Jinwoo Shin. 2020.
\newblock Learning from failure: De-biasing classifier from biased classifier.
\newblock \emph{Advances in Neural Information Processing Systems}, 33:20673--20684.

\bibitem[{Oren et~al.(2019)Oren, Sagawa, Hashimoto, and Liang}]{oren2019distributionally}
Yonatan Oren, Shiori Sagawa, Tatsunori~B Hashimoto, and Percy Liang. 2019.
\newblock Distributionally robust language modeling.
\newblock \emph{arXiv preprint arXiv:1909.02060}.

\bibitem[{Paz et~al.(2020)Paz, Montero-D{\'\i}az, and Moreno-Delgado}]{paz2020hate}
Mar{\'\i}a~Antonia Paz, Julio Montero-D{\'\i}az, and Alicia Moreno-Delgado. 2020.
\newblock Hate speech: A systematized review.
\newblock \emph{Sage Open}, 10(4):2158244020973022.

\bibitem[{Perez et~al.(2020)Perez, Daradoumis, and Puig}]{perez2020rediscovering}
Jose~Quiroga Perez, Thanasis Daradoumis, and Joan Manuel~Marques Puig. 2020.
\newblock Rediscovering the use of chatbots in education: A systematic literature review.
\newblock \emph{Computer Applications in Engineering Education}, 28(6):1549--1565.

\bibitem[{Prabhumoye et~al.(2020)Prabhumoye, Black, and Salakhutdinov}]{prabhumoye2020exploring}
Shrimai Prabhumoye, Alan~W Black, and Ruslan Salakhutdinov. 2020.
\newblock Exploring controllable text generation techniques.
\newblock \emph{arXiv preprint arXiv:2005.01822}.

\bibitem[{R{\"o}ttger et~al.(2020)R{\"o}ttger, Vidgen, Nguyen, Waseem, Margetts, and Pierrehumbert}]{rottger2020hatecheck}
Paul R{\"o}ttger, Bertram Vidgen, Dong Nguyen, Zeerak Waseem, Helen Margetts, and Janet~B Pierrehumbert. 2020.
\newblock Hatecheck: Functional tests for hate speech detection models.
\newblock \emph{arXiv preprint arXiv:2012.15606}.

\bibitem[{Sap et~al.(2019{\natexlab{a}})Sap, Card, Gabriel, Choi, and Smith}]{sap2019risk}
Maarten Sap, Dallas Card, Saadia Gabriel, Yejin Choi, and A~Noah Smith. 2019{\natexlab{a}}.
\newblock The risk of racial bias in hate speech detection.
\newblock In \emph{ACL}.

\bibitem[{Sap et~al.(2019{\natexlab{b}})Sap, Gabriel, Qin, Jurafsky, Smith, and Choi}]{sap2019social}
Maarten Sap, Saadia Gabriel, Lianhui Qin, Dan Jurafsky, Noah~A Smith, and Yejin Choi. 2019{\natexlab{b}}.
\newblock Social bias frames: Reasoning about social and power implications of language.
\newblock \emph{arXiv preprint arXiv:1911.03891}.

\bibitem[{Udomcharoenchaikit et~al.(2022)Udomcharoenchaikit, Ponwitayarat, Payoungkhamdee, Masuk, Buaphet, Chuangsuwanich, and Nutanong}]{udomcharoenchaikit2022mitigating}
Can Udomcharoenchaikit, Wuttikorn Ponwitayarat, Patomporn Payoungkhamdee, Kanruethai Masuk, Weerayut Buaphet, Ekapol Chuangsuwanich, and Sarana Nutanong. 2022.
\newblock Mitigating spurious correlation in natural language understanding with counterfactual inference.
\newblock In \emph{Proceedings of the 2022 Conference on Empirical Methods in Natural Language Processing}, pages 11308--11321.

\bibitem[{Waseem(2016)}]{waseem2016you}
Zeerak Waseem. 2016.
\newblock Are you a racist or am i seeing things? annotator influence on hate speech detection on twitter.
\newblock In \emph{Proceedings of the first workshop on NLP and computational social science}, pages 138--142.

\bibitem[{Weidinger et~al.(2022)Weidinger, Uesato, Rauh, Griffin, Huang, Mellor, Glaese, Cheng, Balle, Kasirzadeh et~al.}]{weidinger2022taxonomy}
Laura Weidinger, Jonathan Uesato, Maribeth Rauh, Conor Griffin, Po-Sen Huang, John Mellor, Amelia Glaese, Myra Cheng, Borja Balle, Atoosa Kasirzadeh, et~al. 2022.
\newblock Taxonomy of risks posed by language models.
\newblock In \emph{2022 ACM Conference on Fairness, Accountability, and Transparency}, pages 214--229.

\bibitem[{Welbl et~al.(2021)Welbl, Glaese, Uesato, Dathathri, Mellor, Hendricks, Anderson, Kohli, Coppin, and Huang}]{welbl2021challenges}
Johannes Welbl, Amelia Glaese, Jonathan Uesato, Sumanth Dathathri, John Mellor, Lisa~Anne Hendricks, Kirsty Anderson, Pushmeet Kohli, Ben Coppin, and Po-Sen Huang. 2021.
\newblock Challenges in detoxifying language models.
\newblock \emph{arXiv preprint arXiv:2109.07445}.

\bibitem[{Xu et~al.(2021)Xu, Pathak, Wallace, Gururangan, Sap, and Klein}]{xu2021detoxifying}
Albert Xu, Eshaan Pathak, Eric Wallace, Suchin Gururangan, Maarten Sap, and Dan Klein. 2021.
\newblock Detoxifying language models risks marginalizing minority voices.
\newblock \emph{arXiv preprint arXiv:2104.06390}.

\bibitem[{Yin and Zubiaga(2021)}]{yin2021towards}
Wenjie Yin and Arkaitz Zubiaga. 2021.
\newblock Towards generalisable hate speech detection: a review on obstacles and solutions.
\newblock \emph{PeerJ Computer Science}, 7:e598.

\bibitem[{Zampieri et~al.(2019{\natexlab{a}})Zampieri, Malmasi, Nakov, Rosenthal, Farra, and Kumar}]{zampieri-etal-2019-semeval}
Marcos Zampieri, Shervin Malmasi, Preslav Nakov, Sara Rosenthal, Noura Farra, and Ritesh Kumar. 2019{\natexlab{a}}.
\newblock \href {https://doi.org/10.18653/v1/S19-2010} {{S}em{E}val-2019 task 6: Identifying and categorizing offensive language in social media ({O}ffens{E}val)}.
\newblock In \emph{Proceedings of the 13th International Workshop on Semantic Evaluation}, pages 75--86, Minneapolis, Minnesota, USA. Association for Computational Linguistics.

\bibitem[{Zampieri et~al.(2019{\natexlab{b}})Zampieri, Malmasi, Nakov, Rosenthal, Farra, and Kumar}]{zampieri2019semeval}
Marcos Zampieri, Shervin Malmasi, Preslav Nakov, Sara Rosenthal, Noura Farra, and Ritesh Kumar. 2019{\natexlab{b}}.
\newblock Semeval-2019 task 6: Identifying and categorizing offensive language in social media (offenseval).
\newblock \emph{arXiv preprint arXiv:1903.08983}.

\end{thebibliography}

\clearpage


\appendix
\onecolumn

\section{Related Works}
\label{sec:related-work}

\textbf{Controlled Generation} can be broadly categorized into fine-tuning methods \cite{krause2020gedi}, data-based \cite{keskar2019ctrl, gururangan2020don}, decoding-time approaches using attribute classifiers \cite{dathathri2019plug, krause2020gedi} and causality based approaches \cite{madhavan-etal-2023-cfl}. Majority of these techniques were tested on toxicity mitigation and sentiment control. 
The dependence of attribute regularizers on probabilistic classifiers make them prone to  such spurious correlations \cite{kaddour2022causal, feder2022causal}.

\noindent In the \textbf{Unintended Bias problem} LMs which are detoxified inherit a tendency to be biased against protected groups. LM quality is compromised due to a detoxification side-effect \cite{welbl2021challenges, xu2021detoxifying}.  
Some works address LM control through improving datasets \cite{sap2019social}. 
Unfortunately, this makes annotation and data curation more expensive. As an alternative, there is growing interest in training accurate models in presence of biased data \cite{oren2019distributionally}. Our work fits into this framework. 

\noindent In the context of \textbf{Toxicity Mitigation}, \cite{welbl2021challenges} highlight that detoxification methods have unintended effects on marginalized groups. 
They showcased that detoxification makes LMs more brittle to distribution shift, affecting its robustness in certain parts of language that contain mentions of minority groups. 
Concretely, words such as “female” are identified as being toxic, as they co-occur with toxic text, and hence the LM stops speaking about them \cite{xu2021detoxifying}. This is called the unintended bias problem. This unintended bias problem can manifest as systematic differences in performance of the LM for different demographic groups.

\noindent  \textbf{Toxicity Detection} Toxicity is a well studied problem in context of responsible and safe AI effort. Hence, we foucs our experiments on toxicty mitigation in this study. Several works have also studied the angle from toxic text detection. Numerous studies have explored toxic text detection, including \HateBERT \citep{caselli2020hatebert}, \HateCheck \citep{rottger2020hatecheck}, and \perspective \cite{lees2022new}. We employ the \HateBERT model for assessing local hatefulness and utilize \perspective for third-party evaluation, where we report the corresponding metrics.

\vspace{0.1in}
\noindent \textbf{Causal Methods for text:} Spurious correlations between protected groups and toxic text can be identified is by understanding the causal structure.  \cite{feder2022causal} emphasizes on the connect between causality and NLP. 
Towards mitigation of the bias problem \cite{madhavan-etal-2023-cfl} proposed the use of Causal ATE as a regularization technique and showed experimentally that it does indeed perform as intended. 
In this paper, we probe the Causal ATE metric theoretically, and prove that the Causal ATE metric is less susceptible to false positives. An  attribute control method based on this metric would mitigate unintended bias. We provide a theoretical basis from which to understand the Causal ATE metric and showcase that this causal technique provides robustness across contexts for attribute control in language models.

\vspace{-0.1in}
\section{Importance of using a Causal Graph}

\vspace{-0.1in}
Given estimates of the probability $\Prob\{a_i\mid s\}$ for attributes in text generated by a Language Model (LM), the potential for fine-tuning the LM towards specific attributes becomes apparent. However, numerous challenges persist.

\vspace{0.05in}
\noindent Firstly, attribute classifiers are prone to spurious correlations. For instance, if a protected token like `Muslim' frequently appears in toxic sentences, the attribute classifier detecting toxicity might penalize the generation of the word `Muslim'. This brings out in light that there is a trade-off between detoxification of LM and LM quality for text generation clearly detailed out in \citep{welbl2021challenges}. LM avoids to generate sentences containing protected tokens leading to higher perplexity for texts with these protected attrbiutes. 
Additionally, these classifier models providing $\Prob{a_i\mid s}$ estimates themselves may be LMs, resulting in slow training and requiring substantial computational resources.

\vspace{0.05in}
\noindent Utilizing a causal graph directly addresses these challenges. It offer computational efficiency during training and are immune to spurious correlations, detecting interventional attribute distributions rather than conditional distributions through counterfactual interventions. Moreover, we get both flexibility and transparency regarding their exact form, features unavailable with LM classifiers.

\section{Causal ATE is Generalizable}
\label{section: Causal ATE is Generalizable}
\begin{figure}[!htb]
\begin{center}
    \includegraphics[width=0.72\textwidth]{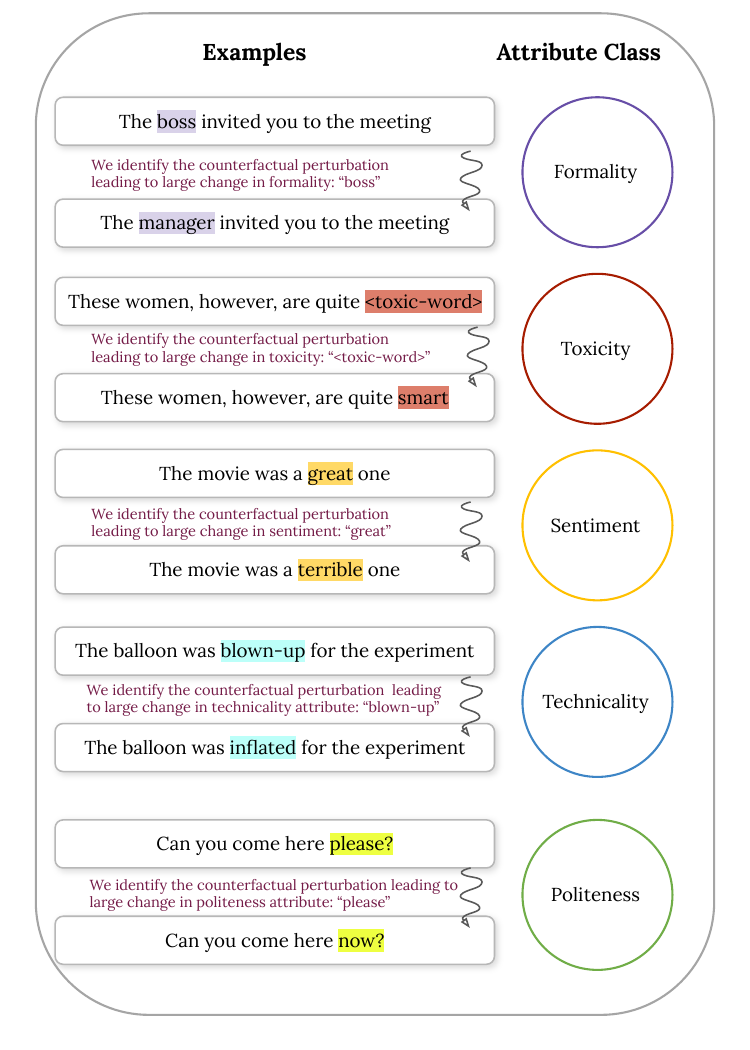}
    \caption{Illustration of how perturbation of the words in a sentence may be used to identify the most important words with respect to an attribute.}
    \label{fig:Page1 Illustration}
\end{center}
\end{figure}

\begin{mdframed}[backgroundcolor=azureblue] 
While the main sections in the paper consider the attribute class of toxicity, we illustrate here that this method can equally be used for various attribute classes thereby easily scalable and generalizable. For instance, in the case of a style like formality, changing `boss' to `manager' has changes the sentence attribute to being more formal. Similarly, a change from the word `terrific' or `great' to `terrible' in the context of a movie review, changes the entire meaning of a sentence, and effectively conveys a more negative sentiment.

\vspace{0.15in}
\noindent Similarly, simple word changes can lead to the language being more technical or polite. Figure \ref{fig:Page1 Illustration} illustrates that causal ATE can be used across various attributes for bias mitigation. The underlying idea is that we can perturb particular words in their context to check the change that they cause on the desired attribute.
\end{mdframed}

\clearpage
\section{Proof of Theorem 1}
\label{sec:proof of theorem 1}

\begin{thm*}
\label{Appendixthm: ATE less than 0.25}
Given Assumptions \ref{a1:replacement} and \ref{a2:dataset}, for $w_i$ which is a spurious correlate, \enspace $\ATE(w_i) \leq 0.25$.
\end{thm*}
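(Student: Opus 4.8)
The plan is to instantiate the generic estimator $f$ appearing in the $\ATE$ definition with the co-occurrence model $\hatA$ of Definition \ref{defn: attribute model for sentences} (the natural choice, since Assumptions \ref{a1:replacement} and \ref{a2:dataset} are stated through $\hata$ and $\hatA$), and then collapse the whole statement onto the elementary inequality $p(1-p)\le \tfrac14$. First I would write the treatment effect out explicitly: for a sentence $s$ in which $w_i$ sits in context $c_i$, and any replacement word $w'_i$,
\begin{align}
\TE(s,w_i,w'_i) &= \hatA(s)-\hatA(s')\nonumber\\
&= \max\bigl(\hata(w_i),\hatA(c_i)\bigr)-\max\bigl(\hata(w'_i),\hatA(c_i)\bigr).
\end{align}
By Assumption \ref{a1:replacement}, $\hata(w'_i)<\hatA(c_i)$, so the second maximum equals $\hatA(c_i)$ and $\TE(s,w_i,w'_i)=\max\bigl(0,\ \hata(w_i)-\hatA(c_i)\bigr)$; in particular it no longer depends on $w'_i$, so the inner expectation over replacements in Equation \ref{eqn: ATE Score} simply returns this value. (If one would rather not lean on Assumption \ref{a1:replacement} here, note that $\max(\hata(w'_i),\hatA(c_i))\ge \hatA(c_i)$ unconditionally, which already gives the same upper bound on $\TE$; this is where a case split over the orderings of $\{\hatA(c_i),\hata(w_i),\hata(w'_i)\}$ would enter.)

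Next I would split the outer expectation in Equation \ref{eqn: ATE Score} --- which ranges uniformly over sentences $s\in\D$ containing $w_i$ --- according to whether $A(s)=1$ or $A(s)=0$. By Definition \ref{defn: attribute model for words}, the fraction of those sentences with $A(s)=1$ is exactly $\hata(w_i)$; set $p:=\hata(w_i)$. On the toxic sentences ($A(s)=1$), Assumption \ref{a2:dataset} guarantees a word in $c_i$ with attribute score at least $\hata(w_i)$, i.e.\ $\hatA(c_i)\ge \hata(w_i)$, hence $\TE(s,w_i,w'_i)=0$. On the non-toxic sentences, using only $\hatA(c_i)\ge 0$, I bound $\TE(s,w_i,w'_i)=\max(0,\hata(w_i)-\hatA(c_i))\le \hata(w_i)=p$.

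Combining the two regimes with their weights,
\begin{align}
\ATE(w_i)\;\le\; p\cdot 0+(1-p)\cdot p \;=\; p(1-p)\;\le\;\tfrac14,
\end{align}
the last inequality being the standard fact that $p(1-p)\le\tfrac14$ for $p\in[0,1]$, with equality at $p=\tfrac12$. This is exactly $\ATE(w_i)\le 0.25$.

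I expect the difficulty to be organizational rather than conceptual: the two things one must spot are (i) that with $f=\hatA$ the treatment effect simplifies to $\max(0,\hata(w_i)-\hatA(c_i))$ and is insensitive to the particular replacement word, and (ii) that the mixing weight between the toxic and non-toxic regimes is precisely $\hata(w_i)$ --- it is this exact coincidence that yields the product $p(1-p)$ and hence the clean constant $0.25$. The remaining care is in degenerate situations (e.g.\ a one-word sentence, where $c_i$ is empty and Assumption \ref{a2:dataset} is vacuous) and in whether Assumption \ref{a1:replacement} is read pointwise or only in expectation; neither changes the conclusion, since the estimate $\E_{w'_i}[\hatA(s')]\ge \hatA(c_i)$ that the bound actually relies on holds with no assumptions at all.
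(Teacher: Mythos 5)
Your proof is correct and follows essentially the same route as the paper's: both instantiate $f=\hatA$, split the outer expectation by the label $A(s)$, use Assumption \ref{a2:dataset} to kill the $A(s)=1$ term, bound the $A(s)=0$ term by $\hata(w_i)$, and identify the mixing weight with $1-\hata(w_i)$ to land on $p(1-p)\le\tfrac14$. Your write-up is if anything slightly tighter, since making the treatment effect explicit as $\max\bigl(0,\hata(w_i)-\hatA(c_i)\bigr)$ and noting $\E_{w'_i}[\hatA(s')]\ge\hatA(c_i)$ holds unconditionally replaces the paper's case enumeration over orderings of $\{\hatA(c_i),\hata(w_i),\hata(w'_i)\}$.
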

\begin{proof}
    If we consider three numbers $\{\hatA(c_i),\hata(w_i),\hata(w'_i)\}$, there are six possible orderings of this set. We can subsume these orderings into two cases: 
    \vspace{0.1in}
    \begin{enumerate}
        \item $\hatA(c_i) < \hata(w'_i)$.
        \item $\hatA(c_i) \geq \hata(w'_i)$.
    \end{enumerate}
    
    \noindent Within these cases, we study the variation of $ATE(w_i)$ with $\hata(w_i)$. We plot these results in the Figure \ref{Appendixfig:Graphs for ATE Score}.
    
    \begin{figure}[!thb]
    \begin{center}
    \includegraphics[width=0.72\textwidth]{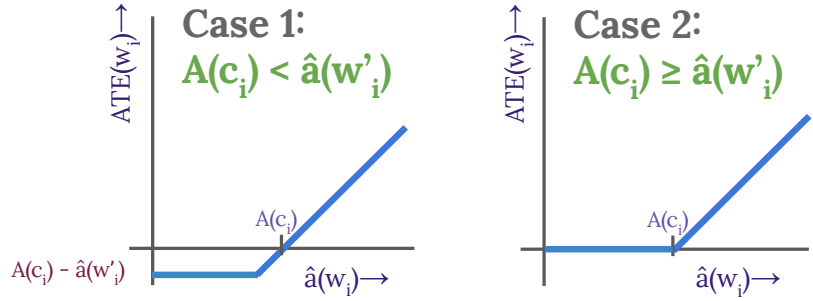}
    \caption{Graph of ATE score of a given word $w_i$ with $\hata(w_i)$ given two cases }
    \label{Appendixfig:Graphs for ATE Score}
        
    \end{center}
    \end{figure}
    \noindent Note that by Assumption \ref{a1:replacement}, we have $\hata(w'_i) \leq \hatA(c_i)$. Therefore, Case (2) in Figure \ref{Appendixfig:Graphs for ATE Score} is sufficient for proof.
    We have:
    \begin{align}
        \ATE(w_i)& = \E_{s\in\D} \E_{w'_i\in s'} \left[\hatA(s) - \hatA(s')\right]\\
        &= \frac{n(A(s)=1 \mid w_i \in s)}{n(s \mid w_i \in s)} \E_{w'_i\in s'} \left[\hatA(s) - \hatA(s')\right] \nonumber \\
        &+ \frac{n(A(s)=0 \mid w_i \in s)}{n(s \mid w_i \in s)} \E_{w'_i\in s'} [\hatA(s) - \hatA(s')]
     \end{align}
     But by Assumption \ref{a2:dataset}, in toxic sentences, $\hatA(s) = \hatA(c_i) \geq \hata(w'_i)$. Therefore $\E_{w'_i\in s'} \{\hatA(s) - \hatA(s')\} = 0$. Then:
     \begin{align}
         \ATE(w_i) &= \frac{n(A(s) = 0  \mid w_i \in s)}{n(s \mid w_i \in s)} \E_{w'_i\in s'} [\hatA(s) - \hatA(s')]
    \end{align}

\noindent But $\hatA(s) - \hatA(s')$ is at most $\hata(w_i)$ as:\\ (1) if $\hata(w_i) \leq \hatA(c_i)$, then $\hatA(s) - \hatA(s') = 0$ \\(2) otherwise $\hatA(s) - \hatA(s') = \hata(w_i) - \hatA(s') \leq \hata(w_i)$. Then:
    \begin{align}        
        \ATE(w_i) &\leq \frac{n(A(s)=0 \mid w_i \in s)}{n(s \mid w_i \in s)} \hata(w_i) \\
        &= \frac{n(A(s)=0 \mid w_i \in s)}{n(s \mid w_i \in s)} \frac{n(A(s)=1 \mid w_i \in s)}{n(s \mid w_i \in s)}\nonumber\\
        &= p\cdot(1-p)
    \end{align}
     for some $p\in[0,1]$. But $p\cdot(1-p)\leq 0.25\enspace \forall p\in [0,1]$.
\end{proof}

\vspace{-0.05in}
\noindent Based on Theorem \ref{Appendixthm: ATE less than 0.25} and Lemma \ref{lem:lemma relating sentence to word ATE}, $A(s) \leq 0.25$ 
if each $w_i\in s$\hspace{1.5pt}is a spurious correlate,\hspace{1.5pt}i.e.\hspace{2pt}non-causal,\hspace{2pt}for attribute $A.$

\section{Experimental Results in Detail for Zampieri et al. and Gao et al. Datasets}

In this section we provide the full set of results on our runs across models for the two datasets \citet{gao2017detecting} and \citet{zampieri-etal-2019-semeval}. The plot in \ref{fig:Multiple Models plot} illustrates the reduction in toxicity classification by using ATE score on the \citet{zampieri-etal-2019-semeval} dataset for three types of classifiers. 

\noindent We provide the full tabular results in Tables \ref{table: fullresults gao} and \ref{table: fullresults zampieri}.

\begin{figure}[!thb]
\begin{center}
\includegraphics[width=\textwidth]{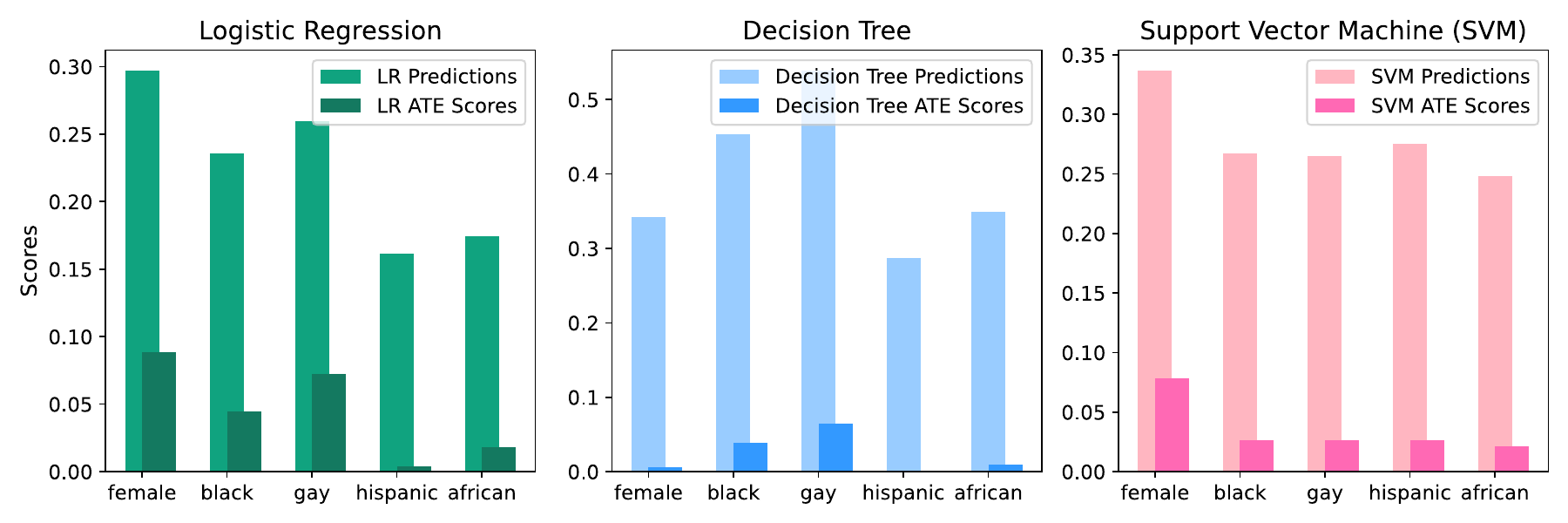}
\caption{For the \citet{zampieri-etal-2019-semeval} dataset, we compute the mitigation of toxicity score using three different classifiers, and the ATE scores computed using the respective classifiers. These show a reduction on toxicity for protected groups across different models.}
\label{fig:Multiple Models plot}   
\end{center}
\end{figure}


\vspace{0.5in}
\begin{table*}[!hb]
\tiny 
\centering
\caption{Classifier Metrics by Protected Category for the Gao et al. Dataset}
\label{table: fullresults gao}
\begin{tabular}{l|ccc|ccc|ccc|ccc|ccc}
\toprule
 & \multicolumn{3}{c|}{African} & \multicolumn{3}{c|}{Black} & \multicolumn{3}{c|}{Female} & \multicolumn{3}{c|}{Gay} & \multicolumn{3}{c}{Hispanic} \\
\midrule
 & Pred & ATE & Diff & Pred & ATE & Diff & Pred & ATE & Diff & Pred & ATE & Diff & Pred & ATE & Diff \\
\midrule
LR & 0.201 & 0.099 & \textcolor{darkcyan}{0.102} & 0.300 & 0.108 & \textcolor{darkcyan}{0.192} & 0.270 & 0.167 & \textcolor{darkcyan}{0.103} & 0.470 & 0.167 & \textcolor{darkcyan}{0.303} & 0.166 & 0.011 & \textcolor{darkcyan}{0.155} \\
SVM & 0.282 & 0.062 & \textcolor{darkcyan}{0.220} & 0.282 & 0.052 & \textcolor{darkcyan}{0.230} & 0.301 & 0.082 & \textcolor{darkcyan}{0.219} & 0.371 & 0.154 & \textcolor{darkcyan}{0.217} & 0.246 & 0.057 & \textcolor{darkcyan}{0.189} \\
GB & 0.225 & 0.052 & \textcolor{darkcyan}{0.173} & 0.335 & 0.071 & \textcolor{darkcyan}{0.264} & 0.225 & 0.000 & \textcolor{darkcyan}{0.225} & 0.653 & 0.204 & \textcolor{darkcyan}{0.449} & 0.225 & 0.020 & \textcolor{darkcyan}{0.205} \\
NB & 0.460 & 0.002 & \textcolor{darkcyan}{0.458} & 0.510 & 0.047 & \textcolor{darkcyan}{0.463} & 0.444 & 0.004 & \textcolor{darkcyan}{0.440} & 0.657 & 0.107 & \textcolor{darkcyan}{0.550} & 0.615 & 0.000 & \textcolor{darkcyan}{0.615} \\
NN1Layer & 0.000 & 0.003 & \textcolor{red}{-0.003} & 0.000 & 0.059 & \textcolor{red}{-0.059} & 0.000 & 0.024 & \textcolor{red}{-0.024} & 1.000 & 0.197 & \textcolor{darkcyan}{0.803} & 0.000 & 0.000 & \textcolor{darkcyan}{0.000} \\
NN2Layer & 0.000 & 0.000 & \textcolor{darkcyan}{0.000} & 0.000 & 0.096 & \textcolor{red}{-0.096} & 0.002 & 0.000 & \textcolor{darkcyan}{0.002} & 1.000 & 0.217 & \textcolor{darkcyan}{0.783} & 0.000 & 0.000 & \textcolor{darkcyan}{0.000} \\
NN3Layer & 0.000 & 0.160 & \textcolor{red}{-0.160} & 0.000 & 0.097 & \textcolor{red}{-0.097} & 0.000 & 0.000 & \textcolor{darkcyan}{0.000} & 0.993 & 0.165 & \textcolor{darkcyan}{0.828} & 0.000 & 0.000 & \textcolor{darkcyan}{0.000} \\
\bottomrule
\end{tabular}
\end{table*}

\vspace{0.5in}
\begin{table*}[!hb]
\tiny 
\centering
\caption{Classifier Metrics by Protected Category for the Zampieri et al. Dataset}
\label{table: fullresults zampieri}
\begin{tabular}{l|ccc|ccc|ccc|ccc|ccc}
\toprule
 & \multicolumn{3}{c|}{African} & \multicolumn{3}{c|}{Black} & \multicolumn{3}{c|}{Female} & \multicolumn{3}{c|}{Gay} & \multicolumn{3}{c}{Hispanic} \\
\midrule
 & Pred & ATE & Diff & Pred & ATE & Diff & Pred & ATE & Diff & Pred & ATE & Diff & Pred & ATE & Diff \\
\midrule
LR & 0.174 & 0.020 & \textcolor{darkcyan}{0.154} & 0.236 & 0.049 & \textcolor{darkcyan}{0.187} & 0.297 & 0.075 & \textcolor{darkcyan}{0.223} & 0.260 & 0.098 & \textcolor{darkcyan}{0.162} & 0.161 & 0.143 & \textcolor{darkcyan}{0.018} \\
SVM & 0.248 & 0.030 & \textcolor{darkcyan}{0.218} & 0.267 & 0.036 & \textcolor{darkcyan}{0.232} & 0.337 & 0.068 & \textcolor{darkcyan}{0.269} & 0.265 & 0.033 & \textcolor{darkcyan}{0.232} & 0.275 & 0.119 & \textcolor{darkcyan}{0.156} \\
GB & 0.269 & 0.020 & \textcolor{darkcyan}{0.249} & 0.269 & 0.013 & \textcolor{darkcyan}{0.256} & 0.269 & 0.008 & \textcolor{darkcyan}{0.261} & 0.269 & 0.003 & \textcolor{darkcyan}{0.266} & 0.269 & 0.033 & \textcolor{darkcyan}{0.236} \\
NB & 0.349 & 0.009 & \textcolor{darkcyan}{0.341} & 0.453 & 0.055 & \textcolor{darkcyan}{0.398} & 0.343 & 0.183 & \textcolor{darkcyan}{0.160} & 0.539 & 0.070 & \textcolor{darkcyan}{0.469} & 0.287 & 0.000 & \textcolor{darkcyan}{0.287} \\
NN1Layer5 & 0.000 & 0.000 & \textcolor{red}{-0.000} & 0.000 & 0.052 & \textcolor{red}{-0.052} & 0.000 & 0.000 & \textcolor{red}{-0.000} & 0.000 & 0.114 & \textcolor{red}{-0.114} & 0.000 & 0.000 & \textcolor{darkcyan}{0.000} \\
NN2Layer105 & 0.000 & 0.000 & \textcolor{darkcyan}{0.000} & 0.000 & 0.090 & \textcolor{red}{-0.090} & 0.000 & 0.170 & \textcolor{red}{-0.170} & 0.000 & 0.104 & \textcolor{red}{-0.104} & 0.000 & 0.000 & \textcolor{darkcyan}{0.000} \\
NN3Layer20105 & 0.000 & 0.200 & \textcolor{red}{-0.200} & 0.000 & 0.126 & \textcolor{red}{-0.126} & 0.000 & 0.075 & \textcolor{red}{-0.075} & 0.000 & 0.046 & \textcolor{red}{-0.046} & 0.000 & 0.000 & \textcolor{darkcyan}{0.000} \\
\bottomrule
\end{tabular}
\end{table*}

\noindent \textbf{Note:} We note that the neural classifiers may have overfit on the \citet{zampieri-etal-2019-semeval} dataset due to which the numbers are either close to 0 or 1.

\clearpage
\section{Experimental Setup}

\subsection{Dataset Details}

We conducted experiments on the publically available Zampieri \citep{zampieri2019semeval} and Gao \citep{gao2017detecting} datasets.

\subsection{Hyper-parameters}
Details in our GitHub repository: \href{https://github.com/causalate-mitigates-bias/causal-ate-mitigates-bias}{github.com/causalate-mitigates-bias/causal-ate-mitigates-bias}

\subsection{Result Statistics}
Our run details are provided on the README.md file of our GitHub repository: \href{https://github.com/causalate-mitigates-bias/causal-ate-mitigates-bias/blob/main/README.md}{https://github.com/causalate-mitigates-bias/causal-ate-mitigates-bias/blob/main/README.md}

\subsection{Compute Resources}

All our experiments were carried out using NVidia 1080 GPU Machines with Intel Core i7-7700K @ 4.2GHz. Our experiments utilized approximately 100 CPU-hours and 10 GPU-hours.

\subsection{Tools and packages}
 We list the tools used in our requirements.txt file of our GitHub repository: \href{https://github.com/causalate-mitigates-bias/causal-ate-mitigates-bias/blob/main/requirements.txt}{https://github.com/causalate-mitigates-bias/causal-ate-mitigates-bias/blob/main/requirements.txt}

\subsection{Use of AI Assistants}

We have used AI Assistants (GPT-4) to help format our charts as well as help create latex tables.

\end{document}